\NewDocumentCommand\col{g}{%
  \IfNoValueTF{#1}{\ensuremath{\mathrm{vec}}}{\ensuremath{\mathrm{vec}}\of{#1}}%
}
\NewDocumentCommand\of{og}{%
  \IfNoValueTF{#1}%
    { \IfNoValueTF{#2}{}{\!\({#2}\)} }%
    { \IfNoValueTF{#2}{\!\[{#1}\]}{\!\{{#2}\}} }%
}
\DeclareMathOperator{\Diff}{\ipaclap{D}{\raisebox{.204em}{\textpalhook}\kern.44em}\kern-.1em}
\NewDocumentCommand\diff{g}{%
  \IfNoValueTF{#1}
  {\text{\texthtd}}
  {\text{\texthtd}\of{#1}}%
}
\RenewDocumentCommand\ln{g}{%
  \IfNoValueTF{#1}{\mathrm{ln\ }}{\mathrm{ln}\of{#1}}%
}
\NewDocumentCommand\Real{og}{%
  \IfNoValueTF{#1}%
    { \IfNoValueTF{#2}{\mathcal{R}\!\!\mathpzc{e}}{\mathcal{R}\!\!\mathpzc{e}\!\{{#2}\}} }%
    { \IfNoValueTF{#2}{\mathcal{R}\!\!\mathpzc{e}\!\[{#1}\]}{\mathcal{R}\!\!\mathpzc{e}\!\({#2}\)} }%
}
\NewDocumentCommand\Imag{og}{%
  \IfNoValueTF{#1}%
    { \IfNoValueTF{#2}{\mathcal{I}\!\!\mathpzc{m}}{\mathcal{I}\!\!\mathpzc{m}\!\{{#2}\}} }%
    { \IfNoValueTF{#2}{\mathcal{I}\!\!\mathpzc{m}\!\[{#1}\]}{\mathcal{I}\!\!\mathpzc{m}\!\({#2}\)} }%
}
\RenewDocumentCommand\cos{g}{%
  \IfNoValueTF{#1}{\mathrm{cos}}{\mathrm{cos}\of{#1}}%
}
\RenewDocumentCommand\sin{g}{%
  \IfNoValueTF{#1}{\mathrm{sin}}{\mathrm{sin}\of{#1}}%
}
\RenewDocumentCommand\tan{g}{%
  \IfNoValueTF{#1}{\mathrm{tan}}{\mathrm{tan}\of{#1}}%
}
\RenewDocumentCommand\arccos{g}{%
  \IfNoValueTF{#1}{\mathrm{arccos}}{\mathrm{arccos}\of{#1}}%
}
\RenewDocumentCommand\arcsin{g}{%
  \IfNoValueTF{#1}{\mathrm{arcsin}}{\mathrm{arcsin}\of{#1}}%
}
\RenewDocumentCommand\arctan{g}{%
  \IfNoValueTF{#1}{\mathrm{arctan}}{\mathrm{arctan}\of{#1}}%
}
\RenewDocumentCommand\cot{g}{%
  \IfNoValueTF{#1}{\mathrm{cot}}{\mathrm{cot}\of{#1}}%
}
\newcommand{\floor}[1]{\ensuremath{\left\lfloor #1 \right\rfloor}}
\NewDocumentCommand\tr{g}{%
  \IfNoValueTF{#1}{\mathrm{tr}}{\mathrm{tr}\of{#1}}%
}
\NewDocumentCommand\diag{og}{%
  \IfNoValueTF{#1}%
    { \IfNoValueTF{#2}{\ensuremath{\mathrm{diag}}}{\ensuremath{\mathrm{diag}\of{#2}}} }%
    { \IfNoValueTF{#2}{\ensuremath{\mathrm{diag}\of[#1]}}{\ensuremath{\mathrm{diag}\of[]{#2}}} }%
}
\RenewDocumentCommand\exp{g}{%
  \IfNoValueTF{#1}{\ensuremath{\mathrm{exp}}}{\ensuremath{\mathrm{exp}}\of{#1}}%
}
\newcommand{\E}[2][]{\Operator[#1]{E}{#2}}
\NewDocumentCommand\C{g}{%
  \IfNoValueTF{#1}{\mathrm{Cov}}{\mathrm{Cov}\of{#1}}%
}
\renewcommand{\(}{\ensuremath{\left(}}
\renewcommand{\)}{\ensuremath{\right)}}
\renewcommand{\[}{\ensuremath{\left[}}
\renewcommand{\]}{\ensuremath{\right]}}
\let\oldBracketLeft\{
\let\oldBracketRight\}
\renewcommand{\{}{\ensuremath{\left\oldBracketLeft}}
\renewcommand{\}}{\ensuremath{\right\oldBracketRight}}
\NewDocumentCommand\F{og}{%
  \IfNoValueTF{#1}%
    { \IfNoValueTF{#2}{\mathcal{F}}{\mathcal{F}\!\{{#2}\}} }%
    { \IfNoValueTF{#2}{\mathcal{F}\!\[{#1}\]}{\mathcal{F}\!\({#2}\)} }%
}
\NewDocumentCommand\FInv{og}{%
  \IfNoValueTF{#1}%
    { \IfNoValueTF{#2}{\mathcal{F}^{-1}}{\mathcal{F}^{-1}\!\{{#2}\}} }%
    { \IfNoValueTF{#2}{\mathcal{F}^{-1}\!\[{#1}\]}{\mathcal{F}^{-1}\!\({#2}\)} }%
}
\NewDocumentCommand\rect{g}{%
  \IfNoValueTF{#1}
  {\ensuremath{\mathrm{rect}}}
  {\ensuremath{\mathrm{rect}\of{#1}}}%
}
\NewDocumentCommand\sinc{g}{%
  \IfNoValueTF{#1}
  {\ensuremath{\mathrm{sinc}}}
  {\ensuremath{\mathrm{sinc}\of{#1}}}%
}
\NewDocumentCommand\supp{g}{%
  \IfNoValueTF{#1}
  {\ensuremath{\mathrm{supp}}}
  {\ensuremath{\mathrm{supp}\of{#1}}}%
}
\let\oldMathcal\mathcal
\renewcommand{\mathcal}[1]{\ensuremath{\oldMathcal{#1}}}
\def\foreach#1#2#3{%
  \@test@foreach{#1}{#2}#3,\@end@token
}
\def\@swallow#1{}
\def\@test@foreach#1#2{%
  \@ifnextchar\@end@token%
    {\@swallow}%
    {\@foreach{#1}{#2}}%
}
\def\@foreach#1#2#3,#4\@end@token{%
  #1{#2}{#3}%
  \@test@foreach{#1}{#2}#4\@end@token%
}
\newtheorem{theorem}{Theorem}[section]
\newtheorem{remark}{Remark}
\newenvironment{proof}[1][Proof]{\begin{trivlist}
\item[\hskip \labelsep {\bfseries #1}]}{\end{trivlist}}
\newenvironment{definition}[1][Definition]{\begin{trivlist}
\item[\hskip \labelsep {\bfseries #1}]}{\end{trivlist}}
\def\l|{\left|}
\def\r|{\right|}
\def\l({\left(}
\def\r){\right)}
\def\l[{\left[}
\def\r]{\right]}
\renewcommand{\E}[1]{\mathbb{E}\left[ #1 \right]}
\newcommand{\half}{{\frac{1}{2}}}
\begin{document} 
\title{Rate-optimal Meta Learning of Classification Error}


\name{Morteza Noshad Iranzad and Alfred O. Hero III \thanks{This work was partially supported by a grant from ARO, number W911NF-15-1-0479.}}
\address{University of Michigan,
Electrical Engineering and Computer Science,
Ann Arbor, Michigan, U.S.A}


\maketitle

\begin{abstract}

Meta learning of optimal classifier error rates allows an experimenter to empirically estimate the intrinsic ability of any estimator to discriminate between two populations, circumventing the difficult problem of estimating the optimal Bayes classifier. To this end we propose a weighted nearest neighbor (WNN) graph estimator for a tight bound on the Bayes classification error; the Henze-Penrose (HP) divergence. Similar to recently proposed HP estimators \cite{berisha2016}, the proposed estimator is non-parametric and does not require density estimation. However, unlike previous approaches the proposed estimator is rate-optimal, i.e., its mean squared estimation error (MSEE) decays to zero at the fastest possible rate of $O(1/M+1/N)$ where $M,N$ are the sample sizes of the respective populations. We illustrate the proposed WNN meta estimator for several simulated and real data sets.

\end{abstract}
\section{Introduction}
The minimum classification error, also known as Bayes error, is the best (minimum) average probability of error that can be achieved by any binary classifier of a sample coming from one of two classes.
Although the Bayes error can be represented in the form of an integral for a simplest case of binary classification, analytical evaluation of this integral is often not feasible, even when densities are known \cite{berisha2016}. Based on this fact, several previous works have investigated upper and lower bounds on the Bayes error, which are easy to compute analytically. A Bound based on Chernoff $\alpha$-divergence has been proposed in \cite{chernoff}. Bhattacharya divergence, which is a special case of Chernoff $\alpha$-divergence for $\alpha=\half$, is used in a number of applications involving Bayes error including feature selection \cite{xuan2006,zhang2007,reyes2006}. 
Recently Berisha et al proposed tighter lower and upper bounds based on HP divergence with parameter $p$, where $p$ is the prior probability of class 1 \cite{berisha2016}. They proved that the bounds are tight for $p=1/2$. 

The problem of estimating the Bayes classification error directly, without the need to estimate the Bayes classifier function, is called the meta-learning problem. It is of crucial importance in reinforcement learning where an estimate of potential performance gains is used to guide the choice of future actions, e.g., selecting a data source \cite{sutton1}. Information divergence approaches to solving the meta-learning problem estimate various divergence functionals that measure the dissimilarity between the population distributions.

There are two major information divergence estimation approaches; plug-in and direct estimation. Plug-in methods first compute estimates of the population densities and plug them into the formula for the information divergence. Kernel Density Estimator (KDE) and $k$-Nearest Neighbor ($k$-NN) population density estimation are commonly used in the plug-in approach \cite{kevin2014,krishnamurthy2014}. In contrast, direct estimation approaches bypass density estimation entirely, producing an estimator of the information divergence using geometric functions of the data. Geometric quantities such as minimal graphs are commonly used for direct estimation of information divergences using the $k$-NN graph on the dataset to estimate R\'{e}nyi entropy \cite{beardwood1959shortest, hero2003}, using the MST graphs to estimate HP divergence \cite{Henze}, and using the nearest neighbor ratios (NNR) method to estimate various divergence measures\cite{noshad}, are some of the examples of direct graph based estimators. Direct estimation methods have several advantages over plug-in estimators such as lower computational complexity, simplicity of the estimator, imposing less constraints on the density functions, and offering an intuitive graph theoretical interpretation of the information measure. 

The HP divergence was defined by Henze \cite{Henze_MST,henze_knn} as the almost sure limit of the Friedman-Rafsky (FR) multi-variate two sample test statistic. Thus the FR two sample test statistic can be interpreted as an asymptotically consistent estimator of the HP divergence. The FR procedure is as follows. Assume that we have two data-sets $X$ and $Y$. The FR test statistic is formed by counting the edges of MST graph of the joint data set $Z:=X\cup Y$, which connect dichotomous points, i.e., a point in $X$ to a point in  $Y$. Later in \cite{henze_knn}, Henze proposed another similar graph based estimator that considers $k$-NN graph instead of the MST graph. However, the main FR test statistics using MST graph has received more attention than the $k$-NN variant.  
The authors of \cite{henze_knn} proved the asymptotic consistency and unbiasedness of FR statistics based on type coincidence, but the convergence rates of these estimators have remained unknown since then. In \cite{moon2015} Moon et al used an optimal plug-in density estimator to estimate HP divergence. As mentioned before, since plug-in estimation needs multi-step estimation procedure, which consists of estimating each of the densities in the first place, and then plugging-in the estimated densities in the divergence function, compared to the direct estimation methods, it suffers from slower runtimes and requires stringent on the density functions.

In this paper we propose a new direct estimator of the HP divergence based on a weighted $k$-NN graph. We first derive the convergence rates of the $k$-NN based FR test statistics, defined as the number of edges in the $k$-NN graph over the joint data set $Z:=X\cup Y$, which connect dichotomous points. We prove that the bias rate of this estimator is upper bounded by $O\of{\of{k/N}^{\gamma/d}}+e^{-ck}$, where $N$ and $d$ respectively are the number and dimension of the samples, $\gamma$ is the H\"{o}lder smoothness parameter of the densities and $c$ is a constant. Note that the convergence rate of this estimator worsens in higher dimensions and does not achieve the optimal parametric rate of $O\of{1/N}$. Therefore, we propose a direct estimation method based on a weighted $k$-NN graph. We refer to this method as the weighted nearest neighbor (WNN) estimator. The graph includes a weighted and directed edge between any pair of nodes $R$ and $S$ only if the types of $R$ and $S$ are different (i.e. $R\in X$ and $S\in Y$) where $S$ is the set of $k$th nearest neighbors of $R$. We prove that if the edge weights are obtained from the solution of a certain optimization problem, we can construct a rate-optimal HP divergence estimator based on the sum of the weights of the dichotomous edges. The convergence rate of this estimator is proved to be $O(1/N)$, which is both optimal and independent of $d$. Finally, we emphasize that the proposed WNN estimator is completely different from the weighted matching estimator.

\section{Main Results}

In this section we recall the Henze-Penrose (HP) divergence and propose an optimal estimator based on the $k$-NN graph. All of the proofs of the convergence theorems are provided in the Appendix of arXiv version.

Consider two density functions $f_X$ and $f_Y$ with support $\mathcal{M}\subseteq \mathbb{R}^d$. The HP-divergence between $f_X$ and $f_Y$ is denoted by $D_{P}\left(f_X(x)\vert\vert f_Y(x)\right)$ and defined as

\begin{align}\label{HP_def}
D_{p}=1-\int \frac{f_X(x)f_Y(x)}{pf_X(x)+qf_Y(x)}dx
\end{align}
where $p$ is a parameter and $p+q=1$. We also define $\eta:=p/q$. In \cite{noshad} $p$ is the number of empirical samples from the first class.

\textbf{Assumptions:}
We assume that the densities $f_1$ and $f_2$ have the same bounded support set and are lower bounded by $C_L>0$ and upper bounded by $C_U$. We also assume that they belong to H\"{o}lder smoothness class with parameter $\gamma$:


\begin{definition}\label{Holder}
Given the support set $\mathcal{X} \subseteq \mathbb{R}^d$, a function $f:\mathcal{X} \to \mathbb{R}$ is called H\"{o}lder continuous with parameter $0<\gamma\leq 1$, if there exists a positive constant $G_f$, possibly depending on $f$, such that 
\begin{equation}
|f(y)-f(x)|\leq G_f\|y-x\|^{\gamma},
\end{equation}
for every $x\neq y \in \mathcal{X}$.
\end{definition}




\subsection{$k$-NN Estimator}

\begin{definition}
Let $X=\{X_1,...,X_N\}$ and $Y=\{Y_1,...,Y_M\}$ respectively denote i.i.d samples with densities $f_1$ and $f_2$, such that $M=\floor{\frac{Nq}{p}}$.  
Let $G_k(X,Y)$ be the graph of $k$ nearest neighbors constructed over the joint set $Z = X\cup Y$. In other words, edges of $G_k(X,Y)$ connect the points $x\in Z$ to their $k$th nearest neighbors. Assume that $\mathcal{E}(X,Y)$ is the set of edges of $G_k(X,Y)$ connecting dichotomous points. Then the $K$-NN estimator of HP-divergence, $\widehat{D_p}$, is defined as 

\begin{align}\label{define_knn_est}
\widehat{D_p}(X,Y)=1-|\mathcal{E}(X,Y)|\frac{N+M}{2NM}.
\end{align}

The idea behind this estimator is similar to the idea of MST estimator of HP-divergence proposed by Friedman and Rafsky (FR) \cite{FR}, in which we count the number of edges connecting dichotomous points in the minimal spanning tree of the joint data \cite{Henze_MST}. If $N=M$ and the densities are almost equal, then with probability of almost $1/2$ every $k$th nearest neighbor edge belongs to \mathcal{E}(X,Y). In this case $|\mathcal{E}(X,Y)|\approx N$, and $\widehat{D_p}\approx 0$. 

In the following theorems we derive upper bounds on the bias and variance rates. Here the bias and variance are defined as $\mathbb{B}[\hat{T}]=\mathbb{E}[\hat{T}]-T$ and $\mathbb{V}[\hat{T}]=\mathbb{E}[\hat{T}^2]-\mathbb{E}[\hat{T}]^2$, respectively, where $\hat{T}$ is an estimator of the parameter $T$.

\begin{theorem} \label{bias_theorem}
 The bias of the $k$-NN estimator for HP divergence satisfies
\begin{align} \label{bias_knn}
\mathbb{B}\of[\widehat{D_p}(X,Y)]= O\of{(k/N)^{\gamma/d}}+O\of{\mathcal{C}(k)},
\end{align}
where $\mathcal{C}(k):=exp(-3k^{1-\delta})$ for a fixed $\delta\in (2/3,1)$.
Here $\gamma$ is the H\"{o}lder smoothness parameter. 
\end{theorem}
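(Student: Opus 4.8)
The plan is to reduce the bias of $\widehat{D_p}$ to the analysis of a single conditional probability — the chance that a given point's $k$-th nearest neighbor in $Z$ has the opposite type — and then to control that probability pointwise by a combination of a smoothing error and a concentration error. Concretely, write $|\mathcal{E}(X,Y)| = \sum_{Z_i \in Z} \mathbf{1}\{\text{type of the }k\text{-th NN of }Z_i \text{ differs from type of }Z_i\}$ (summing over the directed $k$-NN relation), so that by linearity $\mathbb{E}[\widehat{D_p}] = 1 - \frac{N+M}{2NM}\sum_i \Pr(\text{edge at }Z_i \text{ is dichotomous})$. By the i.i.d.\ structure it suffices to compute, for a fixed point $x$ drawn from the mixture density $\bar f(x) := p f_1(x) + q f_2(x)$, the probability $\pi_k(x)$ that, conditioned on $x$ being present in $Z$, its $k$-th nearest neighbor has the complementary label. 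The target value is the ``local mixing probability'': if the $k$-NN ball around $x$ were infinitesimally small, a neighbor is type-$X$ with probability $p f_1(x)/\bar f(x)$ and type-$Y$ with probability $q f_2(x)/\bar f(x)$, so the probability of a dichotomous edge emanating from a type-$X$ point is $q f_2(x)/\bar f(x)$ and from a type-$Y$ point is $p f_1(x)/\bar f(x)$; integrating against the appropriate densities produces exactly the integral $\int \frac{f_1 f_2}{p f_1 + q f_2}$ (after using $M = \lfloor Nq/p\rfloor$ so that $\frac{N+M}{2NM}$ matches the normalization), which is $1 - D_p$.

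The first main step is the smoothing (bias-of-the-ball) estimate. Let $r_k(x)$ be the (random) distance from $x$ to its $k$-th nearest neighbor in $Z$. On the ball $B(x, r_k(x))$ the conditional probability that a neighboring point is of type $X$ is $\frac{p\int_{B} f_1}{\int_B \bar f}$ rather than $\frac{p f_1(x)}{\bar f(x)}$; the H\"older assumption (Definition 1.2) with parameter $\gamma$ and the lower bound $C_L>0$ on the densities give $\bigl| \frac{p\int_B f_1}{\int_B \bar f} - \frac{p f_1(x)}{\bar f(x)} \bigr| = O\bigl(r_k(x)^{\gamma}\bigr)$ uniformly in $x$. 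Since $\bar f$ is bounded below and above, a standard $k$-NN radius bound gives $\mathbb{E}[r_k(x)^{\gamma}] = O((k/N)^{\gamma/d})$ (the volume of $B(x,r_k(x))$ concentrates around $k/(N\bar f(x))$). The second main step is the concentration error: the event that the $k$-th nearest neighbor's type deviates from the ``ball-averaged'' Bernoulli prediction — because the $k$ neighbors inside the ball are an exchangeable but finite sample — contributes a term that I would bound by a Chernoff/Hoeffding argument on the count of type-$X$ points among the $k$ nearest neighbors, yielding the $\exp(-3k^{1-\delta})$ factor $\mathcal{C}(k)$ after splitting into a ``typical radius'' event and its exponentially small complement; the exponent $1-\delta$ with $\delta \in (2/3,1)$ is the slack needed to absorb the fluctuation of $r_k(x)$ itself into the concentration bound. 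Summing $\pi_k(x)$ against the mixture density and comparing to $\int \frac{f_1 f_2}{pf_1+qf_2}$ then gives $|\mathbb{E}[\widehat{D_p}] - D_p| = O((k/N)^{\gamma/d}) + O(\mathcal{C}(k))$.

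The main obstacle I anticipate is handling boundary effects and the coupling between the random radius $r_k(x)$ and the type counts within $B(x,r_k(x))$. Near $\partial\mathcal{M}$ the ball $B(x,r_k(x))$ is not fully contained in the support, so the ``volume $\approx k/(N\bar f(x))$'' heuristic degrades; I would either assume (as the paper's boundedness/smoothness hypotheses seem to intend) enough regularity of $\partial\mathcal{M}$ that the truncated-ball volume is still comparable to the full-ball volume up to constants, or restrict the leading analysis to an interior region and show the $O((k/N)^{1/d})$-width boundary layer contributes only $O((k/N)^{\gamma/d})$ to the bias. The coupling issue is resolved by conditioning on $r_k(x)$, noting that \emph{given} the radius the labels of the $k$ enclosed points are i.i.d.\ Bernoulli$\bigl(\tfrac{p\int_B f_1}{\int_B \bar f}\bigr)$, so the concentration step is clean once the radius is frozen; one then integrates out $r_k(x)$ using its tight concentration. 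The remaining work — verifying that the normalization $\frac{N+M}{2NM}$ with $M=\lfloor Nq/p\rfloor$ reproduces $1-D_p$ exactly in the limit, and tracking the floor function's $O(1/N)$ error — is routine and subsumed in the stated rate.
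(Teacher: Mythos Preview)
Your proposal follows essentially the same route as the paper: split $|\mathcal{E}(X,Y)|$ into per-point indicators, use linearity and the i.i.d.\ structure to reduce everything to the single conditional probability $\Pr\bigl(Q_k(X_1)\in Y\mid X_1=x\bigr)$ (and its symmetric counterpart), show this equals $\frac{qf_Y(x)}{pf_X(x)+qf_Y(x)}+O\bigl((k/N)^{\gamma/d}\bigr)+O(\mathcal{C}(k))$, and integrate against $f_X$. The only structural difference is that the paper does not prove this pointwise estimate here but simply invokes Lemma~3.3 of \cite{noshad}, whereas you sketch its derivation from scratch via H\"older smoothing plus a typical-radius event.

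One small imprecision worth flagging: the $\mathcal{C}(k)$ contribution does not come from concentration of \emph{label} counts among the $k$ neighbors (the estimator looks only at the single $k$-th neighbor, whose label is one Bernoulli draw with mean already equal to the sphere-averaged ratio, so no finite-sample label concentration is needed). It comes exactly from the mechanism you name at the end of that sentence---the tail $\Pr\bigl(r_k(x)$ is atypically large$\bigr)$, controlled by a Chernoff bound on the total point count in a fixed ball; on the complementary ``typical radius'' event the H\"older bound alone gives the $(k/N)^{\gamma/d}$ term. With that correction your outline matches the argument behind the cited lemma.
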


\begin{remark}
Note that in order that $\hat{D}_p(X,Y)$ be asymptotically unbiased, $k$ needs to grow with $N$. The minimum bias rate of $O\of{\frac{\log N}{N}}^{\gamma/d}$ can be achieved by selecting $k=O(\log N)$.
\end{remark}

\begin{theorem}\label{variance}
The variance of the $k$-NN estimator for the HP divergence satisfies
\begin{align}
\mathbb{V}\of[\widehat{D_p}(X,Y)]\leq O\of{\frac{1}{N}}.
\end{align}
\end{theorem}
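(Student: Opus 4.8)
The plan is to bound the variance of $\widehat{D_p}(X,Y)$ via the Efron–Stein inequality applied to the statistic $|\mathcal{E}(X,Y)|$, treating the $N+M$ sample points $Z_1,\dots,Z_{N+M}$ (obtained by concatenating $X$ and $Y$) as the independent inputs. Since $\widehat{D_p}$ is an affine function of $|\mathcal{E}(X,Y)|$ with coefficient $\tfrac{N+M}{2NM} = \Theta(1/N)$ (using $M = \lfloor Nq/p\rfloor = \Theta(N)$), it suffices to show $\mathbb{V}\of[|\mathcal{E}(X,Y)|] = O(N)$; multiplying by the squared coefficient $\Theta(1/N^2)$ then gives the claimed $O(1/N)$. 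By Efron–Stein, $\mathbb{V}\of[|\mathcal{E}|] \le \tfrac12 \sum_{i=1}^{N+M} \mathbb{E}\of[(|\mathcal{E}| - |\mathcal{E}^{(i)}|)^2]$, where $\mathcal{E}^{(i)}$ is the edge set of the $k$-NN graph after resampling the $i$-th point by an independent copy $Z_i'$.

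The key step is a \emph{bounded-difference with bounded influence} estimate: replacing a single point $Z_i$ by $Z_i'$ changes the $k$-NN graph only at edges incident to $Z_i$ (or $Z_i'$) and at edges of those points that had $Z_i$ among their $k$ nearest neighbors. The out-degree of every node is exactly $k$, so the edges \emph{emanating from} $Z_i$ contribute at most $O(k)$ changes. The crucial geometric fact is that in $\mathbb{R}^d$ any fixed point can be the $k$-th nearest neighbor of at most $c_d\, k$ other points, where $c_d$ depends only on the dimension (this follows from a standard cone-covering / packing argument: partition $\mathbb{R}^d$ into $O(1)$ cones of small angular width around $Z_i$, and within each cone only the $k$ closest points to $Z_i$ can have $Z_i$ as one of their $k$ nearest neighbors). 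Hence the in-degree contribution is also $O(k)$, giving $\big||\mathcal{E}| - |\mathcal{E}^{(i)}|\big| \le c_d' k$ almost surely, uniformly in $i$. Plugging this into Efron–Stein yields $\mathbb{V}\of[|\mathcal{E}|] \le \tfrac12 (N+M)(c_d' k)^2 = O(N k^2)$, which already gives $\mathbb{V}\of[\widehat{D_p}] = O(k^2/N)$; with the choice $k = O(\log N)$ from the Remark this is $O((\log N)^2/N)$. To get the sharper $O(1/N)$ stated, I would refine the per-coordinate bound: rather than the worst-case $O(k)$ change, bound $\mathbb{E}\of[(|\mathcal{E}|-|\mathcal{E}^{(i)}|)^2]$ by a constant independent of $k$, by noting that each of the $O(k)$ potentially-affected edges flips its dichotomous status only if its two endpoints' types differ, an event whose conditional probability, after summing the interaction over the $O(k)$ neighbors and using that a resampled point lands near $Z_i$ with probability $O(k/N)$, telescopes to $O(1)$ total expected squared change.

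The main obstacle is this last refinement — showing the expected squared local change is $O(1)$ rather than merely $O(k^2)$. This requires care because naively each affected edge contributes $\Theta(1)$ to the difference and there are $\Theta(k)$ of them. The resolution is to observe that the edges lost and the edges gained are \emph{strongly correlated}: when $Z_i$ is resampled to $Z_i'$, most of the $k$ out-edges of a third point $Z_j$ that previously pointed ``through'' the region near $Z_i$ are simply rerouted to nearby points of the same type with high probability, so the \emph{net} change in dichotomous edge count has variance governed by the boundary region $\{x : \text{type of }x\text{'s }k\text{-NN is ambiguous}\}$, whose probability mass is $O(k/N)$ per point under the lower density bound $C_L$ and Hölder continuity. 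Combining the $O(k)$ affected edges with the $O(1/k)$ effective per-edge variance (from this rerouting cancellation) gives the desired $O(1)$ bound, hence $\mathbb{V}\of[\widehat{D_p}(X,Y)] \le O(1/N)$. Throughout, the assumptions that $f_1,f_2$ are bounded between $C_L$ and $C_U$ on a common bounded support, together with $M = \Theta(N)$, are used to control all the occurring conditional densities and to convert counting bounds into probability bounds.
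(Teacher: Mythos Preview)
Your Efron--Stein strategy is the same one the paper uses, but you have misread the definition of $G_k(X,Y)$: each vertex is joined only to its \emph{$k$-th} nearest neighbor (the algorithm tests only $Q_k(Z_i)$, and the bias proof writes $|\mathcal{E}_1|=\sum_i N_i$ with $N_i\in\{0,1\}$), so every node has out-degree exactly $1$, not $k$. The paper exploits precisely this. It also sets up Efron--Stein slightly differently from you: it pads $X$ with virtual i.i.d.\ points $X_{N+1},\dots,X_M$, pairs the samples as $Z_i=(X_i,Y_i)$, and applies the inequality over these $M$ independent pairs. It then asserts that resampling a single pair $Z_i$ changes $|\mathcal{E}(Z)|$ by at most $2$ (at most $1$ for the virtual indices $N{+}1\le i\le M$, since only $Y_i$ matters there). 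Plugging this constant bounded difference into Efron--Stein gives
\[
\mathbb{V}\of[\widehat{D_p}] \;\le\; \tfrac12\Bigl(\tfrac{N+M}{2NM}\Bigr)^{2}\!\sum_{i=1}^{M}4 \;=\; \frac{(N+M)^2}{2NM^2} \;=\; O\!\Bigl(\tfrac{1}{N}\Bigr),
\]
with no refinement step whatsoever.

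Consequently, your cone-covering in-degree bound $c_d k$, the intermediate $O(k^2/N)$ estimate, and especially the ``rerouting cancellation'' paragraph are all unnecessary detours caused by the wrong out-degree. More importantly, that final paragraph is not a proof as written: the claims that the net change ``telescopes to $O(1)$'' and that there is an ``$O(1/k)$ effective per-edge variance'' are heuristic assertions without a supporting computation, and it is far from clear how to make them rigorous. Once you use the correct out-degree-$1$ graph, the bounded difference is already a dimension-free constant and the argument collapses to the few lines above, matching the paper.
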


\begin{algorithm} \label{algo_1}
\DontPrintSemicolon
\SetKwInOut{Input}{Input}\SetKwInOut{Output}{Output}
\Input{Data sets $X=\{X_1,...,X_N\}$, $Y=\{Y_1,...,Y_M\}$}

\BlankLine
 
$Z\leftarrow X \cup Y$    \;
\For {each point $Z_i$ in $Z$}{
		
        If ($Z_i\in X$ and $Q_k(Z_i)\in Y$) \\
        \qquad or ($Z_i\in Y$ and $Q_k(Z_i)\in X$) \\
        \qquad\qquad then $S\leftarrow S+1 \qquad$}

\Output{$1-S\frac{N+M}{2NM}$}

\caption{$k$-NN Estimator of HP Divergence }
\end{algorithm}

\subsection{WNN Estimator}
Note that the bias term in Theorem \ref{bias_theorem} depends on $d$. Therefore, for higher dimensions the estimator convergence rate is slower. In order to resolve this issue and achieve optimum convergence rate in any dimension, we propose a modified $k$-NN graph based estimator of HP divergence.  
Assume that the density functions are in the H\"{o}lder space $\Sigma(\gamma,B)$, which consists of functions on $\mathcal{X}$ continuous derivatives up to order $q=\floor{\gamma}\geq d$  and the $q$th partial derivatives are H\"{o}lder continuous with exponent $\gamma'=:\gamma-q$ and Lipshitz constant $B$. Further, assume that the density derivatives up to order $d$ vanish at the boundary.
Fix a constant $L$ where $L\geq d$. Let $\mathcal{L}:=\{l_1,...,l_L\}$ be a set of index values with $l_i<c$, where $\kappa=\floor{c\sqrt{N}}$. We further define the value of the $k$-NN parameter as a function of $l$, i.e. $K(l):=\floor{l\sqrt{N}}$.  

\begin{definition}
Let $X=\{X_1,...,X_N\}$ and $Y=\{Y_1,...,Y_M\}$ respectively denote i.i.d samples with densities $f_1$ and $f_2$, such that $M=\floor{\frac{Nq}{p}}$.  
Let the weight vector $W:=[W(l_1), W(l_2),..., W(l_L)]$ be the solution to the following optimization problem:
\begin{align}
\min_w &\qquad \|w\|_2 \nonumber\\
\textit{subject to} &\qquad \sum_{l\in \mathcal{L}}w(l)=1, \nonumber\\
&\qquad \sum_{l\in \mathcal{L}}w(l)l^{i/d}=0, i\in \mathbb{N}, i\leq d.
\end{align} 

Now define $G_K^W(X,Y)$ as a weighted directed graph over vertices of the joint set $X\cup Y$. There is a directed edge with the weight $W(l)$ between any pair of nodes $R$ and $S$, only if the types of $R$ and $S$ are different (i.e. $R\in X$ and $S\in Y$), where $S$ is the $K(l)$-th nearest neighbor of $R$ for some $l\in \mathcal{L}$. We represent the set of edges of $G_K^W(X,Y)$ by $\mathcal{E}_K^W(X,Y)$.

The proposed WNN estimator   $\widehat{D}_p^W$ of HP divergence, is defined as 
\begin{align}
\widehat{D}_p^W(X,Y)=1-|\mathcal{E}_K^W(X,Y)|\frac{N+M}{2NM}.
\end{align}
\end{definition}

\begin{theorem}\label{WNN_Theorem}
The MSE of the WNN estimator is $O(1/N)$.

\end{theorem}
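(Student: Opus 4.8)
The strategy is to recognize $\widehat{D}_p^W$ as a weighted \emph{ensemble} of the plain $k$-NN estimators of the previous subsection, whose weights are designed to annihilate the leading bias terms of Theorem~\ref{bias_theorem} while essentially preserving the $O(1/N)$ variance of Theorem~\ref{variance}. The first step is an algebraic identity: since $G_K^W(X,Y)$ carries an edge of weight $W(l)$ on an ordered dichotomous pair exactly when its head is the $K(l)$-th nearest neighbour of its tail, the total edge weight splits as $|\mathcal{E}_K^W(X,Y)|=\sum_{l\in\mathcal{L}}W(l)\,|\mathcal{E}_{K(l)}(X,Y)|$, and the normalization $\sum_{l}W(l)=1$ then gives
$$\widehat{D}_p^W(X,Y)=\sum_{l\in\mathcal{L}}W(l)\,\widehat{D_p}\big(X,Y\,;\,K(l)\big),$$
an affine combination (coefficients summing to one) of the $k$-NN estimators at the $L$ resolutions $k=K(l)=\lfloor l\sqrt N\rfloor$, each of order $\sqrt N$.

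For the bias I would sharpen Theorem~\ref{bias_theorem} to an asymptotic expansion, using the stronger hypotheses $f_1,f_2\in\Sigma(\gamma,B)$ with $q=\lfloor\gamma\rfloor\ge d$ and density derivatives up to order $d$ vanishing on $\partial\mathcal{M}$: there are constants $c_1,\dots,c_d$ independent of $k,N$ with
$$\mathbb{B}\big[\widehat{D_p}(X,Y;k)\big]=\sum_{i=1}^{d}c_i\,(k/N)^{i/d}+r(k,N),\qquad r(k,N)=O\big((k/N)^{\gamma/d}\big)+O(\mathcal{C}(k)).$$
This follows by the Sricharan--Moon--Hero ensemble argument: conditioning on the $k$-NN radius and Taylor-expanding the HP integrand $f_Xf_Y/(pf_X+qf_Y)$ to order $q$ produces the powers $(k/N)^{i/d}$ with fixed coefficients (the boundary condition removing the otherwise-dominant boundary bias, the interior analysis being localized exactly as in the proof of Theorem~\ref{bias_theorem}), with $O(\mathcal{C}(k))$ absorbing the large-deviation event that the $k$-NN ball has atypical probability mass. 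Substituting $k=K(l)\sim l\sqrt N$ turns the $i$-th term into $c_i\,l^{i/d}N^{-i/(2d)}$; averaging against $W$ and using the design constraints $\sum_{l}W(l)\,l^{i/d}=0$, $i=1,\dots,d$, cancels all of them. Because the optimization defining $W$ is feasible with $N$-independent coefficients, its minimizer has $\|W\|_2$ — hence $\|W\|_1\le\sqrt L\,\|W\|_2$ — bounded by an absolute constant; and since $\gamma\ge d$ gives $(K(l)/N)^{\gamma/d}=O(N^{-\gamma/(2d)})=O(N^{-1/2})$ while $\mathcal{C}(\lfloor l\sqrt N\rfloor)=\exp\!\big(-3(\Theta(\sqrt N))^{1-\delta}\big)=o(N^{-1/2})$, one obtains $\mathbb{B}[\widehat{D}_p^W]=\sum_l W(l)\,r(K(l),N)=O(N^{-1/2})$, i.e. squared bias $O(1/N)$.

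For the variance I would apply Cauchy--Schwarz to the decomposition above,
$$\mathbb{V}\big[\widehat{D}_p^W\big]\le\Big(\sum_{l\in\mathcal{L}}|W(l)|\,\sqrt{\mathbb{V}\big[\widehat{D_p}(X,Y;K(l))\big]}\Big)^{2}\le\|W\|_1^{2}\max_{l\in\mathcal{L}}\mathbb{V}\big[\widehat{D_p}(X,Y;K(l))\big],$$
and combine Theorem~\ref{variance} (at scale $k=K(l)$) with the boundedness of $\|W\|_1$ to get $\mathbb{V}[\widehat{D}_p^W]=O(1/N)$. Adding the squared bias gives $\mathrm{MSE}[\widehat{D}_p^W]=\big(\mathbb{B}[\widehat{D}_p^W]\big)^{2}+\mathbb{V}[\widehat{D}_p^W]=O(1/N)$, which is Theorem~\ref{WNN_Theorem}.

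The main obstacle is the bias expansion of the base estimator: establishing the displayed term-by-term form with a remainder $r(k,N)$ of order $(k/N)^{\gamma/d}$ \emph{uniformly} over the polynomial range $k=\Theta(\sqrt N)$, plus the stated tail $O(\mathcal{C}(k))$. This needs sharp control of the distribution of $k$-NN distances on $\mathcal{M}$, a bona fide $q$-th order Taylor estimate for the HP integrand (where $q\ge d$ is what pushes the leftover below $N^{-1/2}$), and careful treatment of points near $\partial\mathcal{M}$ — precisely why the hypotheses strengthen from plain H\"older (Theorem~\ref{bias_theorem}) to $\Sigma(\gamma,B)$ with vanishing boundary derivatives. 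A secondary point is to check that the $O(1/N)$ variance bound of Theorem~\ref{variance} persists when $k$ grows like $\sqrt N$, which the Efron--Stein estimate behind that theorem should provide.
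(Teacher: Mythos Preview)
Your proposal is correct and matches the paper's argument: both rewrite $\widehat{D}_p^W$ as the weighted ensemble $\sum_{l\in\mathcal{L}}W(l)\,\widehat{D}_p^{K(l)}$, expand the base-estimator bias in powers of $(k/N)^{1/d}$, cancel the leading terms via the weight constraints $\sum_l W(l)l^{i/d}=0$, and then use the $O(1/N)$ variance (the paper invoking the ensemble theorem directly where you unpack it via Cauchy--Schwarz). The only nuance is that the paper's bias coefficients $\phi_{i,\kappa}(N)$ are $N$-dependent, arising from an explicit interior/boundary split at scale $\kappa=\lfloor c\sqrt N\rfloor$, rather than the fixed constants $c_i$ you posit; this does not affect the cancellation, since the weights annihilate the $l^{i/d}$ factor regardless of what multiplies it.
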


\begin{algorithm} \label{algo_1}
\DontPrintSemicolon
\SetKwInOut{Input}{Input}\SetKwInOut{Output}{Output}
\Input{Data sets $X=\{X_1,...,X_N\}$, $Y=\{Y_1,...,Y_M\}$}

\BlankLine
 
$Z\leftarrow X \cup Y$    \;

\For {$l\in \mathcal{L}$}{
\For {each point $Z_i$ in $Z$}{
		
        If ($Z_i\in X$ and $Q_l(Z_i)\in Y$) \\
        \qquad or ($Z_i\in Y$ and $Q_l(Z_i)\in X$) \\
        \qquad\qquad then $S\leftarrow S+W(l)\qquad$}
} 

\Output{$1-S\frac{N+M}{2NM}$}

\caption{WNN Estimator of HP Divergence }
\end{algorithm}

\end{definition} 
\section{numerical Results}

In this section we investigate the behavior of the proposed estimator by numerical experiments. 

Fig. \ref{k} shows the mean estimated HP divergence between two truncated Normal RVs with mean vectors $[0, 0]$ and $[0, 1]$ and variance of $\sigma^2_1=\sigma^2_2=I_2$, as a function of number of samples, $N$, where $I_d$ is the identity matrix of size $d$. Three different values of $k$ are investigated. For each case we repeat the experiment $100$ times, and compute the bias and variance. As $N$ increases, the bias for any $k$ tends to zero. The experiments show that the bias decreases slower as $k$ increases, which is due to the $\of{\frac{k}{N}}^{\gamma/d}$ term in \eqref{bias_knn}. However, according to this experiment, the variance is almost independent of $k$ and decreases linear towards zero.   
\begin{figure}
	\centering
	\includegraphics[width=9cm]{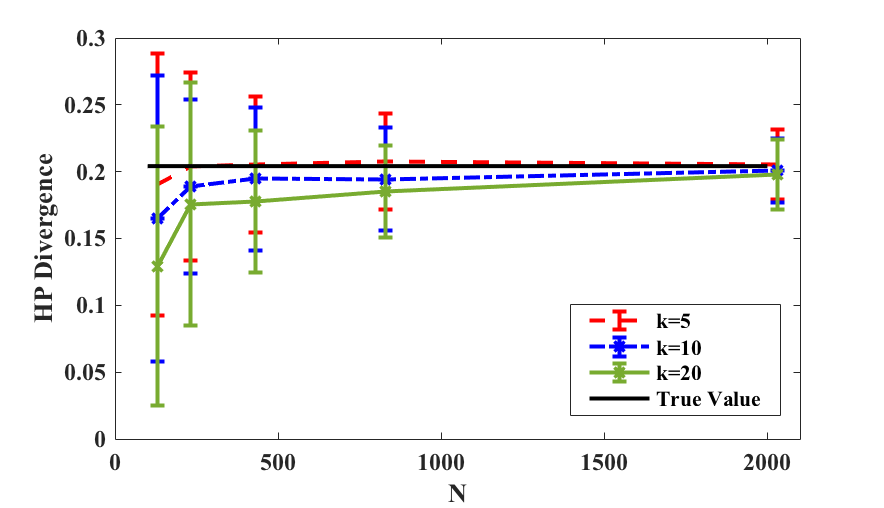}
	\caption{Comparison of the estimated values of $k$-NN estimator with $k=5,10,20$ for HP divergence between two truncated Normal RVs with mean vectors $[0, 0]$ and $[0, 1]$ and variances of $\sigma^2_1=\sigma^2_2=I_2$, plotted against $N$, the number of samples.}\label{k}
\end{figure}


Fig. \ref{dimension} shows the MSE of the $k$-NN estimator for HP divergence between two zero mean Normal random vectors in $\mathbb{R}^2$, with identical covariance matrix $I_d$ whose values are truncated within the range $x\in [-5, 5]$ and $y\in [-5,5]$. The experiment is repeated for three different dimensions of $d=2,10,20$ for fixed $k=5$. 
In agreement with our bias bound in \eqref{bias_knn}, as $d$ increases, the experiment shows that the MSE rate increases.
\begin{figure}
	\centering
	\includegraphics[width=9.0cm]{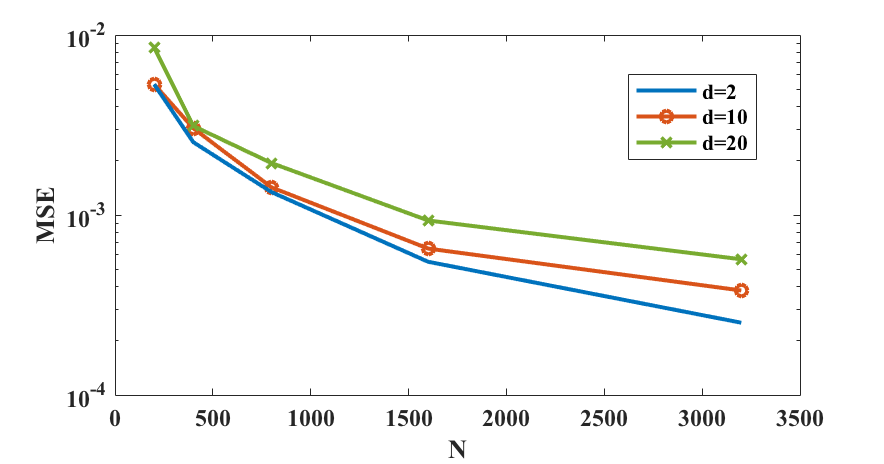}
	\caption{MSE of the $k$-NN estimator for HP divergence between two identical, independent and truncated Normal RVs, as a function of $N$.}\label{dimension}
\end{figure}


In Fig. \ref{d2_MSE} we compare the MSE rates of the three graph theoretical estimators of HP divergence; MST, $k$-NN, and WNN estimators. The HP divergence between two truncated Normal random variables with $d=2$, means of $\mu_1=[0,0]$, $\mu_2=[1,0]$, and covariance matrices of $\sigma_1=I_2$ and $\sigma_2=2I_2$. This experiment verifies the advantage of WNN estimator over the $k$-NN and MST estimators, in terms of their convergence rates. Also the performance of MST estimator is slightly better than the $k$-NN estimator. Note that in this experiment we have set the number of neighbors of the $k$-NN to $k=5$. 
\begin{figure}
	\centering
	\includegraphics[width=9cm]{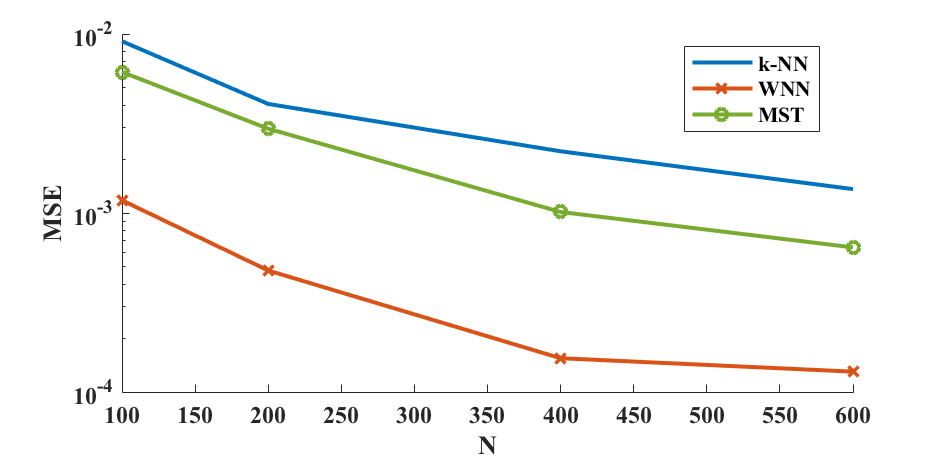}
	\caption{MSE comparison of the three graph theoretical estimators of HP divergence; MST, $k$-NN, and WNN estimators. }\label{d2_MSE}
\end{figure}

Fig. \ref{d2_CB} shows the comparison of the estimators of HP divergence between a truncated Normal RV with mean $[0,0]$ and covariance matrix of $I_2$, and uniform RV within $[-5,5]\times [-5,5]$, in terms of their mean value and $\%95$ confidence band. The confidence band becomes narrower for greater values of $N$, and the WNN estimator has the narrowest confidence band.

\begin{figure}
	\centering
	\includegraphics[width=8cm]{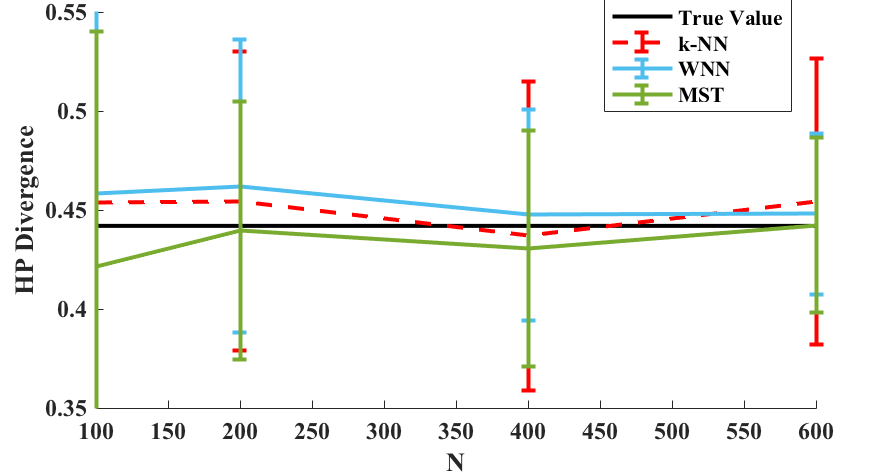}
    \caption{Comparison $k$-NN, MST and WNN estimators of HP divergence between a truncated Normal RV and a uniform RV, in terms of their mean value and $\%95$ confidence band. }\label{d2_CB}
\end{figure}


Finally in Fig. \ref{robot}, we 
compare performance of the WNN to that of the $k$-NN estimators with $k=5$ and $k=10$, for a real data set \cite{freire2009short, robot_data}. The data are measurement from a set of ultrasound sensors arranged circularly around a robot, which navigates through the room following the wall in a clockwise direction. There are total number of $5456$ instances (corresponding to different timestamps), and we use the information of four main sensors as the feature space. The instances are associated with four different classes of actions: move-forward, sharp-right-turn, slight-right-turn and turn-left. In Fig. \ref{robot} we consider the divergence between the sensor measurement for sharp-right-turn and move-forward classes. Note the superior performance of the WNN estimator as compared to the $k$-NN estimators.   
\begin{figure}
	\centering
\includegraphics[width=9cm]{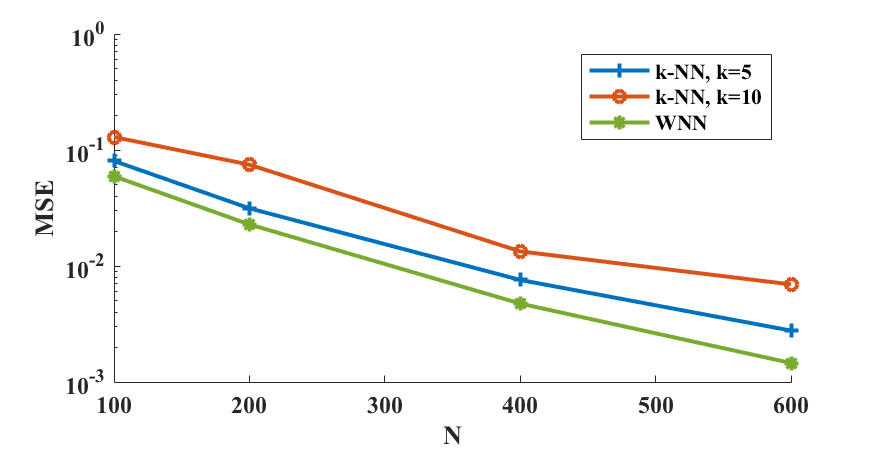}
    \caption{MSE Comparison of the WNN and $k$-NN  estimator with two different parameters $k=5$ and $k=10$ for the robot navigation. dataset}\label{robot}
\end{figure}

\section{Conclusion}
In this paper we derived the convergence rates of $k$-NN version of FR test statistics and  proposed an optimum direct estimation method for HP divergence, based on the weighted $k$-NN graph. We proved that WNN estimator can achieve optimum parametric MSE rate of $O(1/N)$, and we validated our results on simulated and real data sets.


\bibliographystyle{IEEEbib}
\bibliography{citations}
\onecolumn
\newpage

\section{Proofs}\label{Proof_Section}

In this section we derive an upper bound on the bias and variance of the $k$-NN and WNN estimators. In the following we first prove Theorem \ref{bias_theorem}:

\begin{proof}[Proof of Theorem \ref{bias_theorem}]
Let $\mathcal{E}_1(X,Y)$ and $\mathcal{E}_2(X,Y)$ respectively denote the set of $k$-NN edges of $X$ and $Y$ nodes, connecting dichotomous points, i.e., points in $X$ to $Y$ and points in $Y$ to $X$. So, we have $|\mathcal{E}(X,Y)|=|\mathcal{E}_1(X,Y)|+|\mathcal{E}_2(X,Y)|$. From the definition of the estimator, we have

\begin{align}\label{bias_proof_1}
&\mathbb{E}\of[{\widehat{D_p}(X,Y)}]\nonumber\\
&\qquad=1-\E{|\mathcal{E}(X,Y)|\frac{N+M}{2NM}}\nonumber\\
&\qquad=1-\E{|\mathcal{E}_1(X,Y)|\frac{N+M}{2NM}+|\mathcal{E}_2(X,Y)|\frac{N+M}{2NM}}\nonumber\\
&\qquad=1-\frac{\E{|\mathcal{E}_1(X,Y)|}}{2Nq}-\frac{\E{|\mathcal{E}_2(X,Y)|}}{2Mp}+O\of{\frac{1}{N}}\nonumber\\
&\qquad=1-\frac{\E{\sum_{i=1}^{N}N_i}}{2Nq}-\frac{\E{\sum_{i=1}^{M}M_i}}{2Mp}+O\of{\frac{1}{N}},
\end{align}
where $N_i$ and $M_i$ are defined as follows:

\begin{align}
N_i=
\begin{cases}
0 \qquad\qquad  Q_k(X_i) \in X \\
1 \qquad\qquad  Q_k(X_i) \in Y, 
\end{cases}
\end{align}
and 
\begin{align}
M_i=
\begin{cases}
0 \qquad\qquad  Q_k(Y_i) \in Y \\
1 \qquad\qquad  Q_k(Y_i) \in X. 
\end{cases}
\end{align}

The first term in RHS of \ref{bias_proof_1} can be simplified as

\begin{align}\label{bias_proof_2}
\frac{\E{\sum_{i=1}^{N}N_i}}{2Nq}&=\frac{\sum_{i=1}^{N}\E{N_i}}{2Nq}\nonumber\\
&=\frac{\mathbb{E}_{X_1\sim f(x)}\of[\E{N_1\vert X_1}]}{2q}\nonumber\\
&=\frac{\mathbb{E}_{X_1\sim f(x)}\of[\Pr\of{Q_k(X_1)\in Y}]}{2q}.
\end{align}

Now using a result from \cite{noshad} (Lemma 3.3), we have

\begin{align}\label{point_probability_equation}
\Pr\of{Q_i(X_1)\in Y} &= \frac{f_Y(X_1)}{f_X(X_1)+\eta f_Y(X_1)} +\epsilon_{\gamma,k},
\end{align}
where $\epsilon_{\gamma,k}:=O\of{(k/N)^{\gamma/d}}+O\of{\mathcal{C}(k)}$. So, \eqref{bias_proof_2} can further be simplified as 

\begin{align}\label{bias_proof_2}
\frac{\E{\sum_{i=1}^{N}N_i}}{2Nq}&=\frac{1}{2q} \int_x\frac{f_Y(x)}{\eta ^{-1}f_X(x)+f_Y(x)}f_X(x)dx + \epsilon_{\gamma,k} \nonumber\\
&=\half\int\frac{f_X(x)f_Y(x)}{pf_X(x)+qf_Y(x)}dx + \epsilon_{\gamma,k}.
\end{align}

Similarly, we get
\begin{align}\label{bias_proof_3}
\frac{\E{\sum_{i=1}^{M}M_i}}{2Mp}
&=\half\int\frac{f_X(x)f_Y(x)}{pf_X(x)+qf_Y(x)}dx + \epsilon_{\gamma,k}.
\end{align}

Finally using \eqref{bias_proof_2} and \eqref{bias_proof_3} in \eqref{bias_proof_1} results in 

\begin{align}
\mathbb{B}\of[{\widehat{D_p}(X,Y)}]=\epsilon_{\gamma,k}+ O\of{\frac{1}{N}}=\epsilon_{\gamma,k}.
\end{align}

\end{proof}



\begin{proof}[Proof of Theorem \ref{variance}]

Without loss of generality, assume that $N < M$. Introduce extra virtual random points $X_{N+1},...,X_M$ with distribution $f_X(x)$ and define $Z_i:=(X_i,Y_i)$. We use the Efron\hyp Stein inequality on $Z:=(Z_1,...,Z_M)$. Let $Z':=(Z'_1,...,Z'_M)$ be another independent copy of $Z$. Further define $Z^{(i)}:=(Z_1,...,Z_{i-1},Z'_i,Z_{i+1},...,Z_M)$. We also use the simpler notations of $\widehat{D_p}(Z):=\widehat{D_p}(X,Y)$ and $\mathcal{E}(Z):=\mathcal{E}(X,Y)$. Then, according to Efron\hyp Stein inequality we have

\begin{align} \label{Var_proof_1}
\mathbb{V}\of[\widehat{D_p}(Z)] &\leq \frac{1}{2} \sum_{i=1}^M \mathbb{E}\left[\left(\widehat{D_p}(Z)-\widehat{D_p}(Z^{(i)})\right)^2\right]\nonumber\\
&= \frac{1}{2} \sum_{i=1}^N \mathbb{E}\left[\left(\widehat{D_p}(Z)-\widehat{D_p}(Z^{(i)})\right)^2\right]\nonumber\\
&\qquad + \frac{1}{2} \sum_{i=N+1}^M \mathbb{E}\left[\left(\widehat{D_p}(Z)-\widehat{D_p}(Z^{(i)})\right)^2\right].
\end{align}

We first find a bound on the first term using the definition in \eqref{define_knn_est}:

\begin{align}
&\frac{1}{2} \sum_{i=1}^N \mathbb{E}\left[\left(\widehat{D_p}(Z)-\widehat{D_p}(Z^{(i)})\right)^2\right]=\nonumber\\
&\qquad=\half\of{\frac{N+M}{2NM}}^2\sum_{i=1}^N \mathbb{E}\left[\left(|\mathcal{E}(Z)|-|\mathcal{E}(Z^{(i)})|\right)^2\right]\nonumber\\
&\qquad\leq\half\of{\frac{N+M}{2NM}}^2\sum_{i=1}^N 4\nonumber\\
&\qquad\leq\frac{\of{N+M}^2}{2NM^2}\nonumber\\
&\qquad=O\of{\frac{1}{N}},
\end{align}
where in the third line we have used the fact that resampling $Z_i$ at most changes $|\mathcal{E}(Z)|$ by two. We can use the same argument for the second term in \eqref{Var_proof_1} with the difference that resampling $Z_i$ for $N+1\leq i\leq M$ changes $|\mathcal{E}(Z)|$ at most by one. So we get a similar bound of 
\begin{align}
&\frac{1}{2} \sum_{i=N+1}^M \mathbb{E}\left[\left(\widehat{D_p}(Z)-\widehat{D_p}(Z^{(i)})\right)^2\right]=O\of{\frac{1}{M}}.
\end{align}
Note that since $M=\floor{\frac{Nq}{p}}$, $O\of{\frac{1}{M}}=O\of{\frac{1}{N}}$. Therefore we have

\begin{align}
\mathbb{V}\of[\widehat{D_p}(Z)] &\leq O\of{\frac{1}{N}}.
\end{align}
\end{proof}



\begin{proof}[Proof of Theorem \ref{WNN_Theorem}]

The key to proving the theorem on the MSE of the WNN estimator is to use the theory of optimally weighted ensemble estimation \cite{Kevin16,structure2016}. Note that estimator can be written as a weighted sum of $k$-NN estimators with different parameters:

\begin{align}\label{ensemble_form}
\widehat{D}_p^W(X,Y)&=1-|\mathcal{E}_K^W(X,Y)|\frac{N+M}{2NM}\nonumber\\
&=1-\sum_{l\in\mathcal{L}}W(l)|\mathcal{E}_l(X,Y)|\frac{N+M}{2NM}\nonumber\\
&=\sum_{l\in\mathcal{L}}W(l)\of{1-|\mathcal{E}_l(X,Y)|\frac{N+M}{2NM}}\nonumber\\
&=\sum_{l\in\mathcal{L}}W(l)\widehat{D}^l_p(X,Y),
\end{align}
where $\mathcal{E}_k(X,Y)$ are the sets of edges of $k$-NN graph, which connect nodes with different types, and $\widehat{D}^l_p(X,Y)$ is the corresponding estimator. In order to prove the MSE rate of ensemble estimator, we need to derive the bias term in more accurate form than Theorem \ref{bias_theorem}. Note than more restrictive assumptions are required to derive the accurate bias rate. The density functions are assumed to be in the H\"{o}lder space $\Sigma(\gamma,B)$. Fix a constant parameter $\kappa\in \mathbb{N}$, and define the interior support, denoted by $\mathcal{X}_I^\kappa$, as the set of all points for which the $\kappa$-NN ball is completely in the boundary and the support boundary, denoted by $\mathcal{X}_B^\kappa$, is defined as the set of all points for which the $\kappa$-NN ball meets the boundary. According to \cite{noshad} (equation (63)-(64)), for the interior points we have 

\begin{align}\label{interior}
&\Pr\of{Q_k(X_1)\in Y}=\frac{f_Y(X_1)}{\eta^{-1}f_X(X_1)+ f_Y(X_1)}\nonumber\\
&\qquad+
\sum_{i=1}^{q-1} a_i(X_1)(k/N)^{i/d}+\epsilon_{\gamma,k}, 
\end{align}
where $a_i(X_1)$ is a function depending only on $X_1$. Also for the boundary points we have

\begin{align}\label{exterior}
\Pr\of{Q_k(X_1)\in Y}=\frac{f_Y(X_1)}{\eta^{-1}f_X(X_1)+ f_Y(X_1)}+\epsilon_{\gamma,\kappa}.
\end{align}

Let define the notations $P_{\kappa,N}:=\Pr\of{X_1\in \mathcal{X}_I^\kappa}$, $\overline{P}_{\kappa,N}:=1-P_{\kappa,N}$ and $\phi_{i,\kappa}(N):= P_{\kappa,N} \E{a_i(X_1)\vert X_1\in \mathcal{X}_I^\kappa}/2q$. From \eqref{bias_proof_2} and using the equations \eqref{interior} and \eqref{exterior}, we get 

\begin{align}\label{bias_proof_2}
\frac{\E{\sum_{i=1}^{N}N_i}}{2Nq}
&=\frac{\mathbb{E}\of[\Pr\of{Q_k(X_1)\in Y}]}{2q}\nonumber\\
&=\frac{P_{\kappa,N}\mathbb{E}\of[\Pr\of{Q_k(X_1)\in Y}\vert X_1\in \mathcal{X}_I^\kappa]}{2q}\nonumber\\
&+\frac{\overline{P}_{\kappa,N}\mathbb{E}\of[\Pr\of{Q_k(X_1)\in Y}\vert X_1\notin \mathcal{X}_I^\kappa]}{2q}\nonumber\\
&=\half\int\frac{f_X(x)f_Y(x)}{pf_X(x)+qf_Y(x)}dx \nonumber\\
&\qquad +\sum_{i=1}^{q-1} \phi_{i,\kappa}(N)(\frac{k}{N})^{i/d}+\epsilon_{\gamma,\kappa},
\end{align}

and from \eqref{bias_proof_1}, and the choice of $\kappa=\floor{c\sqrt{N}}$ we get

\begin{align}
\mathbb{B}\of[{\widehat{D}_p^k(X,Y)}]=\sum_{i=1}^{q-1} \phi_{i,\kappa}(N)(\frac{k}{N})^{i/d}+O\of{\of{\frac{1}{N}}^{q/2}}.
\end{align}

The proof follows by using the ensemble theorem in (\cite{Kevin16}, Theorem 4) with the parameters $\psi_i(l)=l^{i/d}$ and $\phi'_{i,d}(N)=\phi_{i,\kappa}(N)/N^{i/d}$. Using this theorem and equation \eqref{ensemble_form}, the MSE rate $\widehat{D}_p^W(X,Y)$ is $O\of{\frac{1}{N}}$.

\end{proof}


\end{document}